\def\BState{\State\hskip-\ALG@thistlm}
\newtheorem{theorem}{Theorem}[section]
\newtheorem{lemma}[theorem]{Lemma}
\newtheorem{corollary}[theorem]{Corollary}
\newenvironment{proof}[1][Proof]{\begin{trivlist}
\item[\hskip \labelsep {\bfseries #1}]}{\end{trivlist}}
\newcommand{\indicator}[1]{\mathds{1}_{\left[ {#1} \right] }}
\def \bx{\mathbf x}
\def \cT{\mathcal{T}}
\icmltitlerunning{Optimally Pruning Decision Tree Ensembles With Feature Cost}
\def\ps@pprintTitle{%
 \let\@oddhead\@empty
 \let\@evenhead\@empty
 \def\@oddfoot{}%
 \let\@evenfoot\@oddfoot}
\begin{document}

\twocolumn[
\icmltitle{Optimally Pruning Decision Tree Ensembles With Feature Cost}

% It is OKAY to include author information, even for blind
% submissions: the style file will automatically remove it for you
% unless you've provided the [accepted] option to the icml2015
% package.
\icmlauthor{Feng Nan}{fnan@bu.edu}
\icmlauthor{Joseph Wang}{joewang@bu.edu}
\icmlauthor{Venkatesh Saligrama}{srv@bu.edu}
\icmladdress{Boston University, 8 Saint Mary's Street, Boston, MA}

\vskip 0.3in
]

\begin{abstract}
We consider the problem of learning decision rules for prediction with feature budget constraint. In particular, we are interested in pruning an ensemble of decision trees to reduce expected feature cost while maintaining high prediction accuracy for any test example. We propose a novel 0-1 integer program formulation for ensemble pruning. Our pruning formulation is general - it takes any ensemble of decision trees as input. By explicitly accounting for feature-sharing across trees together with accuracy/cost trade-off, our method is able to significantly reduce feature cost by pruning subtrees that introduce more loss in terms of feature cost than benefit in terms of prediction accuracy gain. Theoretically, we prove that a linear programming relaxation produces the exact solution of the original integer program. This allows us to use efficient convex optimization tools to obtain an optimally pruned ensemble for any given budget. Empirically, we see that our pruning algorithm significantly improves the performance of the state of the art ensemble method BudgetRF.
\end{abstract}

\section{Introduction}
Many modern applications of supervised machine learning face the challenge of test-time budget constraints. For example, in internet search engines \cite{YahooChallenge2010}, features of the query-document pair are extracted whenever a user enters a query at the cost of some CPU time in order to rank the relevant documents. The ranking has to be done in milliseconds to be displayed to the user, making it impossible to extract computationally expensive features for all documents. Rather than simply excluding these computationally expensive features, an adaptive decision rule is needed, so that only cheap features are extracted for the majority of queries and expensive features are extracted for only a small number of difficult queries. Many approaches have been proposed by various authors to solve such test-time budget constraint problem \cite{Gao+Koller:NIPS11,DBLP:conf/icml/XuWC12,trapeznikov:2013b,wang2014lp,wang2014model,Fastmarginbasedcostsensitiveclassification_NanWTS14,NIPS2015_5982}. 

Nan et al. \cite{icml2015_nan15} proposed a novel random forest approach for test-time feature cost reduction. During training, an ensemble of decision trees are built based on random subsampling the training data for each decision tree. A class of \emph{admissible} (essentially monotone and supermodular) impurity functions together with the cost of each feature are used to greedily determine the data split at each internal node of the decision trees. During prediction, a test example is run through each of the trees in the ensemble and the majority label is assigned to the test example. Such a simple strategy is shown to yield a worst-case cost at most $O\left(\log(n)\right)$ times the optimal cost for each decision tree built on $n$ training samples. Empirically, it is shown to have state-of-the-art performance in terms of prediction-cost tradeoff.

The trees in these budgeted random forests are built independently, ignoring the fact that repeated use of the same feature does not incur repeated feature acquisition cost. We exploit interdependencies among the ensemble of trees to achieve better accuracy - cost tradeoff. Theoretically, we propose a general ensemble pruning formulation that solves the accuracy-cost tradeoff exactly; empirically, we demonstrate significant improvement. 

The focus of this paper is on pruning ensembles of decision trees. We assume an ensemble of decision trees are given as inputs; such an ensemble can be obtained using the algorithm proposed by Nan et al. \cite{icml2015_nan15} or any other decision tree ensemble method. Our main contribution is the development of an efficient algorithm for pruning an ensemble of decision trees to explicitly tradeoff prediction accuracy and feature cost. 

% specially structured 0-1 programming formulation of pruning an ensemble of decision trees to explicitly tradeoff prediction accuracy and feature cost. 

\section{Related Work}
Although decision tree pruning has been studied extensively to improve generalization performance, we are not aware of any existing pruning method that takes into account the feature costs.

A popular heuristic for pruning to reduce generalization error is Cost-Complexity Pruning (CCP), introduced by Breiman et al. \cite{breiman1984classification}. It defines a \emph{cost-complexity} measure for each subtree of the decision tree as sum of two terms: the number of misclassified examples in the subtree plus the number of leaves in the subtree times a tradeoff parameter. This measure is also computed when the subtree is pruned to become a leaf. As the tradeoff parameter increases, more emphasis is given to reducing the size of the subtree compared to minizing the number of misclassified examples. The CCP algorithm iteratively selects the subtree with the lowest cost-complexity measure if it were pruned as the tradeoff parameter gradually increases. At each iteration the selected subtree is pruned and the cost-complexity measures are re-computed for the next iteration. Each pruned tree produced in this procedure is optimal with respect to size - no other subtree of the same number of leaves would have a lower misclassification rate than the one obtained by this procedure. As pointed out by Li et al. \cite{Li2001DPP}, CCP has undesirable ``jumps" in the sequence of pruned tree sizes. To alleviate this, they proposed a Dynamic-Program-based Pruning (DPP) method for binary trees. The DPP algorithm is able to obtain optimally pruned trees of all sizes, however, faces the curse of dimensionality when pruning an ensemble of decision trees and taking  feature cost into account. 

Generally, pruning is not considered when constructing random forests as overfitting is avoided by constructing an ensemble of trees. The ensemble approach is a strong approach to avoiding overfitting, however test-time budget constraint problems require consideration of both cost and accuracy.
 
Kulkarni and Sinha \cite{pruningRFSurvey} provide a survey of methods to prune random forests in order to reduce ensemble size. However, these methods do not explicitly account for feature costs.

\section{Background and Notations} A training sample $S=\{(\bx^{(i)},y^{(i)}):{i=1,\dots,N}\}$ is generated i.i.d. from an unknown distribution,  where $\bx^{(i)} \in \Re^K$ is the feature vector with a cost assigned to each of the $K$ features and $y^{(i)}$ is the label for the $i$th example. In the case of multi-class classification $y \in \{1,\dots,M\}$, where $M$ is the number of classes. Given a decision tree $\mathcal{T}$, we index the nodes as $h\in \{1,\dots,|\cT|\}$, where node $1$ represents the root node. For any $h\in \cT$, we define the following standard terminology:

$p(h) \equiv$ set of predecessor nodes of $h$ $\equiv$ set of nodes (excluding $h$) that lie on the path from the root node to $h$. 

$\mathcal{T}_h \equiv$ subtree of $\mathcal{T}$ that is rooted at node $h$. 

$\tilde{\cT} \equiv$ set of leaf nodes of tree $\cT$.

%$s(h) \equiv$ set of successor nodes of $h$ $\equiv$ set of nodes $u$ for which $h\in p(u)$.

$b(h) \equiv$ set of brother (sibling) nodes of $h \equiv$ set of nodes who share the same immediate parent node as $h$. 

$S_h \equiv$ the set of examples in $S$ routed to or through $h$ on $\mathcal{T}$.

$\text{Pred}_h \equiv$ predicted label at node $h$ on $\mathcal{T}$ based on the class distribution of $S_h$. It is equal to the class with the most number of training examples at $h$.

$e_h \equiv$ number of misclassified examples in $S_h$ based on $\text{Pred}_h$. It is equal to $\sum_{i\in S_h} \indicator{y^{(i)}\neq \text{Pred}_h}$.

Finally, the corresponding definitions for $\mathcal{T}$ can be extended to an ensemble of $T$  decision trees $\{\mathcal{T}_t :t=1,\dots,T\}$ by adding an subscript $t$. 

The process of pruning $\mathcal{T}$ at $h$ involves \emph{collapsing} $\mathcal{T}_h$ and making $h$ a leaf node. We say a pruned tree $\mathcal{T}_{\text{Prune}}$, having $\tilde{\cT}_{\text{Prune}}$ as its set of leaf nodes, is a \emph{valid} pruned tree of $\mathcal{T}$ if (1) $\mathcal{T}_{\text{Prune}}$ is a subtree of $\mathcal{T}$ containing root node 1 and (2) for any $h\neq 1$ contained in $\mathcal{T}_{\text{Prune}}$, the sibling nodes $b(h)$ must also be contained in $\mathcal{T}_{\text{Prune}}$.

For a given tree $\mathcal{T}$, let us define the following binary variable for each node $h\in \cT$
$$
z_h=\left\{
\begin{array}{rl}
1 & \text{if node } h \text{ is a leaf in the pruned tree} ,\\
0 & \text{otherwise}.
\end{array} \right.
$$
Proposition 1 of \cite{OptimalPruning2009} showed that the following set of constraints completely characterize the set of valid pruned trees of $\mathcal{T}$. 
$$
z_h+\sum_{u\in p(h)} z_u=1 \qquad \forall h \in \tilde{\mathcal{T}}, \\
$$
$$
z_h \in \{0,1\} \qquad \forall h\in \cT.
$$

A common decision tree pruning objective is to keep the probability of prediction error in the pruned tree as low as possible while reducing the number of tree nodes. Given a decision tree $\cT$, it is easy to see that the overall probability of prediction error is of the pruned tree $\mathcal{T}_{\text{Prune}}$ is
\begin{equation}
\frac{1}{N}\sum_{h\in \cT} e_h z_h. \label{eq:errObj}
\end{equation}

Therefore a decision tree pruning problem can be formulated as the following integer program
\begin{equation*}
\begin{array}{rlcl}
\displaystyle \min_{z_h} & \multicolumn{2}{l}{\frac{1}{N}\sum_{h\in \cT} e_h z_h} \\
\textrm{s.t.} & z_h+\sum_{u\in p(h)} z_u=1 &\forall h \in \tilde{\mathcal{T}}, \\
& z_h \in \{0,1\}  &\forall h\in \mathcal{T}.
\end{array}\tag{IP0}\label{eq:IP0}
\end{equation*}
By showing that the constraint matrix can be turned into a network matrix form, \cite{OptimalPruning2009} showed the above integer problem can be solved exactly by linear program relaxation. 

\section{Pruning with Feature Costs} 
Suppose the feature costs are given by $\{c_k:k=1,\dots,K\}$.
The feature cost incurred by an example is the total costs of \emph{unique} features it encounters in all trees. This is because we assume whenever a feature is acquired its value is cached and subsequent usage incurs no additional cost. 
Specifically, the cost of classifying an example $i$ on decision tree $\mathcal{T}$ is given by 
\begin{equation*}
c(\cT,\bx^{(i)})=\sum_{k=1}^{K}c_k \indicator{\text{feature } k \text{ is used by } \bx^{(i)} \text{ in } \cT}=\sum_{k=1}^{K}c_k w_{k,i},
% \\ = \sum_{k=1}^{K}c_k \text{ min }\{1,\sum_{h\in \text{ Path}(\cT,\bx^{(i)})} \indicator{k \sim h}\},
\end{equation*}
where the binary variables $w_{k,i}$ serve as the indicator variables:
$$
w_{k,i}=\left\{\begin{array}{rl}
1 & \text{ if feature } k \text{ is used by }\bx^{(i)} \text{ in } \cT,\\
0 & \text{ otherwise}.
\end{array} \right.
$$

%where $\text{ Path}(\cT,\bx^{(i)})$ is the set of nodes along the root-to-leaf path traversed by example $\bx$ on $\cT$ and $k\sim h$ means feature $k$ is associated with node $h$. 
Similarly, the cost of classifying $\bx^{(i)}$ on an ensemble of $T$ trees is 
\begin{equation*}
c(\cT_{[T]},\bx^{(i)})=\sum_{k=1}^{K}c_k \indicator{\text{feature } k \text{ is used by } \bx^{(i)} \text{ in any } \cT_t, t=1,\dots,T}. 
% \\ = \sum_{k=1}^{K}c_k \text{ min }\{1,\sum_{t=1}^{T}\sum_{h\in \text{ Path}(\cT_t,\bx^{(i)})} \indicator{k \sim h}\}.
\end{equation*}

%
%Therefore, the average feature cost of classifying a set of examples $S$ is given by
%$$
%\frac{1}{N}\sum_{i=1}^{N}\sum_{k=1}^{K}c_k \text{ min }\{1,\sum_{t=1}^{T}\sum_{h\in \text{ Path}(\cT_t,\bx^{(i)})}\indicator{k \sim h}\}.
%$$

%\section{Pruning Formulations}
In a pruned tree $\cT_{\text{Prune}}$ we can encode the conditions for $w_{k,i}$'s using the leaf indicator variable $z_h$'s. If $z_h=1$ for some node $h$, then the examples that are routed to $h$ must have used all the features in the predecessor nodes $p(h)$. We use $k\sim p(h)$ to denote feature $k$ is used in any predecessor of $h$. Then for each feature $k$ and example $i$, we must have $w_{k,i}\geq z_h$ for all nodes $h$ such that $i\in S_h$ and $k\sim p(h)$. Combining the error term \eqref{eq:errObj} and feature cost in the objective, we arrive at the following integer program:
\begin{equation*}
\begin{array}{rlll}
\displaystyle \min_{z_h,w_{k,i}} & \multicolumn{2}{l}{\frac{1}{N}\displaystyle  \sum_{h\in \mathcal{N}} e_h z_h +\lambda \sum_{k=1}^{K}c_k(\frac{1}{N}\sum_{i=1}^{N}w_{k,i})} \\
\textrm{s.t.} & z_h+ \sum_{u\in p(h)} z_u=1 & \forall h \in \tilde{\mathcal{T}}, \\
& z_h \in \{0,1\} & \forall h\in \mathcal{T},  \\
& w_{k,i}\geq z_h & \forall h:i\in S_h \land k\sim p(h), \\
& & \forall k\in [K], \forall i\in S,   \\
& w_{k,i} \in \{0,1\} & \forall k\in [K], \forall i\in S.
\end{array}
\tag{IP1}\label{eq:IP1}
\end{equation*}
Again, the constraint $w_{k,i}\geq z_h$ ensures that if $h$ is a leaf node in the pruned tree ($z_h=1$) and the $i$th example encounters feature $k$ along the way before arriving at $h$ then $w_{k,i}$ must be 1. 

\tikzset{every tree node/.style={minimum width=1em,draw,circle},
         blank/.style={draw=none},
         edge from parent/.style=
         {draw, edge from parent path={(\tikzparentnode) -- (\tikzchildnode)}},
         level distance=1cm}

Unfortunately, unlike \eqref{eq:IP0}, the constraint set in \eqref{eq:IP1} has fractional extreme points, leading to possibly fractional solutions to the relaxed problem. Consider Tree 1 in Figure \ref{fig:trees}. Feature 1 is used at the root node and feature 2 is used at node 3. There are 7 variables (assuming there is only one example and it goes to leaf 4):
$$
z_1,z_2,z_3,z_4,z_5,w_{1,1},w_{2,1}.
$$

The LP relaxed constraints are:
\begin{align*}
& z_1+z_3+z_4=1 , z_1+z_3+z_5=1 , z_1+z_2=1, \\
& w_{1,1}\geq z_4, w_{1,1}\geq z_3, w_{2,1}\geq z_4, 0\leq z\leq 1.
\end{align*}

The following is a basic feasible solution:
\begin{equation*}
z_1=0,  z_2=1 , z_3=z_4=z_5=0.5, w_{1,1}=w_{2,1}=0.5,
\end{equation*}
because the following set of 7 constraints are active:
\begin{align*}
& z_1+z_3+z_4=1, z_1+z_3+z_5=1, \\
& w_{1,1}\geq z_4, w_{1,1}\geq z_3,w_{2,1}\geq z_4, z_1=0,z_2=1.
\end{align*}

Even if we were to interpret the fractional solution of $z_h$ as probabilities of $h$ being a leaf node, we see an issue with this formulation: the example has $0.5$ probability of stopping at node 3 or 4 ($z_3=z_4=0.5$). In both cases feature 1 at the root node has to be used; but $w_{1,1}=0.5$ indicates that it's only being used half of the times, which is undesirable at all. 

We have seen the LP relaxation of \eqref{eq:IP1} fails to capture the desired behavior of the integer program. We now examine an alternative formulation and show that the optimal solution of its LP relaxation is exactly that of the integer program.

Given a tree $\cT$, feature $k$ and example $\bx^{(i)}$, let $u_{k,i}$ be the first node associated with feature $k$ on the root-to-leaf path the example follows in $\cT$. Clearly, feature $k$ is used by  $\bx^{(i)}$ if and only if none of the nodes between root and $u_{k,i}$ is leaf.
In terms of constraints, we have
\begin{equation}
w_{k,i}+z_{u_{k,i}}+\sum_{h\in p(u_{k,i})} z_h = 1
\end{equation}
as long as feature $k$ is used by $\bx^{(i)}$ in $\cT$. Intuitively, this constraint ensures that for the binary variable $w_{k,i}$ to be non-zero, the tree cannot be pruned before the feature $k$ is obtained (the summation in the constraint equal to zero) and  the feature $k$ must be used in order to split the data (the term $z_{u_{k,i}}$ in the constraint equal to zero).

For a given tree $\cT$ we arrive at the following formulation.
\begin{equation*}
\begin{array}{rll}
\displaystyle \min_{z_h,w_{k,i}} & \multicolumn{1}{l}{\frac{1}{N}\displaystyle  \sum_{h\in \mathcal{N}} e_h z_h +\lambda \sum_{k=1}^{K}c_k(\frac{1}{N}\sum_{i=1}^{N}w_{k,i})} \\
\textrm{s.t.} & z_h+ \sum_{u\in p(h)} z_u=1 \hspace{41 pt} \forall h \in \tilde{\mathcal{T}}, \\
& z_h \in \{0,1\} \hspace{82 pt} \forall h\in \mathcal{T},  \\
& \displaystyle w_{k,i}+z_{u_{k,i}}+\sum_{h\in p(u_{k,i})} z_h =1, \forall k\in K_i,\forall i\in S,  \\
& w_{k,i} \in \{0,1\} \hspace{73 pt} \forall k\in [K], \forall i\in S,
\end{array}
\tag{IP2}\label{eq:IP2}
\end{equation*}
where $K_i$ denotes the set of features the $i$th example uses on tree $\cT$. 

%The dual of LP relaxation of \eqref{eq:IP2} is:
%\begin{equation*}
%\begin{array}{rll}
%\displaystyle \max_{\alpha_h, \beta_{k,i}} & \multicolumn{1}{l}{\displaystyle  \sum_{h \in \tilde{\mathcal{T}}} \alpha_h +\sum_{k,i}\beta_{k,i}} \\
%\textrm{s.t.} & \displaystyle \sum_{u\in \text{ leaf}(h)}\alpha_u+ \sum_{(k,i)\in \text{ dec}(h) } \beta_{k,i} \leq \frac{e_h}{N} \hspace{10 pt} \forall h \in \mathcal{T}, \\
%& \beta_{k,i} \leq \frac{\lambda}{N} c_k  \hspace{73 pt} \forall k\in [K], \forall i\in S,
%\end{array}
%\tag{DLP2}\label{eq:DLP2}
%\end{equation*}
%where $\text{ leaf}(h)$ is the set of leaves of the subtree rooted at node $h$ and $\text{ dec}(h)$ is the set of $(k,i)$'s that have $u_{k,i}$ as descendant of node $h$ (including $h$).

\paragraph{From tree to ensemble: } we generalize \eqref{eq:IP2} to ensemble pruning with tree index $t$: $z^{(t)}_h$ indicates whether node $h$ in $\cT_t$ is a leaf; $w^{(t)}_{k,i}$ indicates whether feature $k$ is used by the $i$th example in $\cT_t$; $w_{k,i}$ indicates whether feature $k$ is used by the $i$th example in any of the $T$ trees $\cT_1,\dots,\cT_T$; $u_{t,k,i}$ is the first node that associated with feature $k$ on the root-to-leaf path the example follows in $\cT_t$. Note that we minimize the average empirical probability of error across all trees, which corresponds to the error of prediction based on averaging the leaf distributions across the ensemble for a given example. 
\begin{equation*}
\begin{array}{rll}
\displaystyle \min_{z^{(t)}_h,w^{(t)}_{k,i}} & \multicolumn{1}{l}{\frac{1}{NT}\displaystyle  \sum_{t=1}^{T}\sum_{h\in \mathcal{N}^{(t)}} e^{(t)}_h z^{(t)}_h +\lambda \sum_{k=1}^{K}c_k(\frac{1}{N}\sum_{i=1}^{N}w_{k,i})} \\
\textrm{s.t.} & z^{(t)}_h+ \sum_{u\in p(h)} z^{(t)}_u=1 \hspace{16 pt} \forall h \in \tilde{\mathcal{T}}_t, \forall t \in [T],\\
& z^{(t)}_h \in \{0,1\} \hspace{63 pt} \forall h\in \mathcal{T}_t, \forall t\in [T], \\
& \displaystyle w^{(t)}_{k,i}+ z^{(t)}_{u_{t,k,i}}+\sum_{h\in p(u_{t,k,i})} z^{(t)}_h=1 , \\
& \hspace{72 pt} \forall k\in K_{t,i},\forall i\in S, \forall t \in [T],\\
& w^{(t)}_{k,i} \in \{0,1\} \hspace{26 pt} \forall k\in [K], \forall i\in S \forall t\in [T],\\
& w^{(t)}_{k,i} \leq w_{k,i} \hspace{26 pt} \forall k\in [K], \forall i\in S \forall t\in [T],\\
& w_{k,i}\in \{0,1\} \hspace{56 pt} \forall k\in [K], \forall i\in S.
\end{array}
\tag{IP3}\label{eq:IP3}
\end{equation*}

\begin{lemma}\label{lemma1}
The equality constraints in \eqref{eq:IP3} can be turned into an equivalent network matrix form for each tree.
\end{lemma}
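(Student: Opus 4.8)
The plan is to reduce the equality constraints of \eqref{eq:IP3}, for one fixed tree $t$ at a time, to the valid-pruned-tree constraints of a single slightly enlarged decision tree, and then invoke the network-matrix construction of \cite{OptimalPruning2009}. First I would note that for a fixed $t$ the equality constraints of \eqref{eq:IP3} are exactly the original pruning constraints $z^{(t)}_h+\sum_{u\in p(h)}z^{(t)}_u=1$ over $h\in\tilde{\cT}_t$, together with the feature constraints $w^{(t)}_{k,i}+z^{(t)}_{u_{t,k,i}}+\sum_{h\in p(u_{t,k,i})}z^{(t)}_h=1$ over all $(k,i)$ with $k\in K_{t,i}$; the linking inequalities $w^{(t)}_{k,i}\le w_{k,i}$ and the aggregate variables $w_{k,i}$ do not enter this system. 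Since a leaf carries no splitting feature, each $u_{t,k,i}$ is an \emph{internal} node of $\cT_t$, and $\{u_{t,k,i}\}\cup p(u_{t,k,i})$ is precisely the set of nodes on the root-to-$u_{t,k,i}$ path. The other structural fact I would record is that each $w^{(t)}_{k,i}$ occurs in exactly one equality constraint, so it behaves like a fresh slack-type variable.

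Next I would build an augmented tree $\cT_t^{+}$ from $\cT_t$ by attaching, for every pair $(k,i)$ with $k\in K_{t,i}$, a new leaf child $\ell_{k,i}$ to the node $u_{t,k,i}$ (several such leaves may hang off the same internal node), and by identifying the variable of this new leaf with $w^{(t)}_{k,i}$, i.e.\ writing $z^{(t)}_{\ell_{k,i}}:=w^{(t)}_{k,i}$. Because every $u_{t,k,i}$ is internal, the original leaves of $\cT_t$ stay leaves of $\cT_t^{+}$ with unchanged ancestor sets, so the leaves of $\cT_t^{+}$ are exactly $\tilde{\cT}_t\cup\{\ell_{k,i}\}$. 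The point is that, under this identification, the pruning constraint of $\cT_t^{+}$ at a new leaf $\ell_{k,i}$ reads $z^{(t)}_{\ell_{k,i}}+\sum_{h\in p_{\cT_t^{+}}(\ell_{k,i})}z^{(t)}_h=w^{(t)}_{k,i}+z^{(t)}_{u_{t,k,i}}+\sum_{h\in p(u_{t,k,i})}z^{(t)}_h=1$, which is verbatim the feature constraint, while the pruning constraint at an original leaf is left untouched. Hence the equality system of \eqref{eq:IP3} restricted to tree $t$ is, after this relabeling, exactly the system characterizing valid pruned trees of $\cT_t^{+}$, whose coefficient matrix is the leaf-versus-node ancestor-incidence matrix of $\cT_t^{+}$ (a $1$ in row $\ell$, column $h$ iff $h$ is an ancestor of, or equal to, $\ell$).

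Finally I would apply the construction of \cite{OptimalPruning2009} to $\cT_t^{+}$: that argument only uses that the underlying object is a rooted tree, and it turns such an ancestor-incidence matrix — roughly, by listing the leaves in a depth-first traversal and replacing each leaf's row by its difference with the preceding leaf's row while keeping one row intact — into a $\{0,\pm1\}$ matrix with at most one $+1$ and at most one $-1$ per column, i.e.\ an incidence matrix of a directed graph, which is a network matrix. These row replacements are a unimodular integer operation, so they leave the solution set (and its integrality) of the equality system unchanged; doing this for every $t$ proves the lemma. I expect the only real work to be the reduction step of the second paragraph — checking that appending one leaf per $w$-variable reproduces all the feature constraints and introduces no spurious ones — which rests entirely on $u_{t,k,i}$ being internal and on each $w^{(t)}_{k,i}$ appearing in a single equality; once that bookkeeping is settled, the network-matrix property is inherited directly from \cite{OptimalPruning2009}.
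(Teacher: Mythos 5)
Your proposal is correct and is essentially the paper's own argument: the paper likewise treats each $w^{(t)}_{k,i}$ as the $z$-variable of a fictitious child node attached to $u_{t,k,i}$ and then invokes the network-matrix construction of \cite{OptimalPruning2009} (the row-differencing under a pre-order/depth-first numbering that you describe). You have simply spelled out the bookkeeping — that $u_{t,k,i}$ is internal, that each $w^{(t)}_{k,i}$ appears in exactly one equality, and that the augmented tree's leaf constraints reproduce the feature constraints verbatim — which the paper leaves implicit.
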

\begin{proof}
This is simply due to an observation that $w^{(t)}_{k,i}$ can be regarded as just another $z$ variable for a fictitious child node of $u_{t,k,i}$ and the rest of proof follows directly from the construction in Proposition 3 of \cite{OptimalPruning2009}.
\end{proof}

\tikzset{every tree node/.style={minimum width=1em,draw,circle},
         blank/.style={draw=none},
         edge from parent/.style=
         {draw, edge from parent path={(\tikzparentnode) -- (\tikzchildnode)}},
         level distance=1cm}
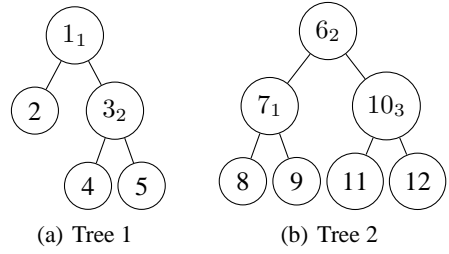
\begin{figure}
\centering
\subfigure[Tree 1]{
\begin{tikzpicture}
\Tree
[.\ensuremath{1_1} 
    [.2 ]    
    [.\ensuremath{3_2} 
    	[.4 ]
    	[.5 ]
    ]
]
\end{tikzpicture}}
~
~
~
\subfigure[Tree 2]{
\begin{tikzpicture}
\Tree
[.\ensuremath{6_2} [.\ensuremath{7_1} [.8 ] [.9 ] ]  [.\ensuremath{10_3} [.11 ] [.12 ] ] ]
\end{tikzpicture}}
\caption{An ensemble of two decision trees with node numbers and associated feature in subscripts}\label{fig:trees}
\end{figure}
Figure \ref{fig:trees} illustrate such a construction. For simplicity we consider only one example being routed to nodes 4 and 11 respectively on the two trees. The equality constraints in \eqref{eq:IP3} can be separated based on the trees and put in matrix form:
\renewcommand{\kbldelim}{(}% Left delimiter
\renewcommand{\kbrdelim}{)}% Right delimiter
\[
\kbordermatrix{
    & z_1 & z_2 & z_3 & z_4 & z_5 & w^{(1)}_{1,1} & w^{(1)}_{2,1}\\
    r_1 & 1 & 1 & 0 & 0 & 0 & 0 & 0 \\
    r_2 & 1 & 0 & 1 & 1 & 0 & 0 & 0 \\
    r_3 & 1 & 0 & 1 & 0 & 1 & 0 & 0 \\
    r_4 & 1 & 0 & 1 & 0 & 0 & 0 & 1 \\
    r_5 & 1 & 0 & 0 & 0 & 0 & 1 & 0
  },
\]
for tree 1 and 
\[
\kbordermatrix{
    & z_6 & z_7 & z_8 & z_9 & z_{10} & z_{11} & z_{12} & w^{(2)}_{2,1} & w^{(2)}_{3,1}\\
    r_1 & 1 & 1 & 1 & 0 & 0 & 0 & 0 & 0 & 0\\
    r_2 & 1 & 1 & 0 & 1 & 0 & 0 & 0 & 0 & 0\\
    r_3 & 1 & 0 & 0 & 0 & 1 & 1 & 0 & 0 & 0\\
    r_4 & 1 & 0 & 0 & 0 & 1 & 0 & 1 & 0 & 0\\
    r_5 & 1 & 0 & 0 & 0 & 1 & 0 & 0 & 0 & 1\\
    r_6 & 1 & 0 & 0 & 0 & 0 & 0 & 0 & 1 & 0
  },
\]
for tree 2. Through row operations they can be turned into network matrices, where there is exactly two non-zeros in each column, a 1 and a $-1$.
\[
\kbordermatrix{
    & z_1 & z_2 & z_3 & z_4 & z_5 & w^{(1)}_{1,1} & w^{(1)}_{2,1}\\
    -r_1 & -1 & -1 & 0 & 0 & 0 & 0 & 0 \\
    r_1-r_2 & 0 & 1 & -1 & -1 & 0 & 0 & 0 \\
    r_2-r_3 & 0 & 0 & 0 & 1 & -1 & 0 & 0 \\
    r_3-r_4 & 0 & 0 & 0 & 0 & 1 & 0 & -1 \\
    r_4-r_5 & 0 & 0 & 1 & 0 & 0 & -1 & 1 \\
    r_5 & 1 & 0 & 0 & 0 & 0 & 1 & 0
  },
\]
for tree 1 and 
\[
%\resizebox{1\hsize}{!}{
 \kbordermatrix{
    & z_6 & z_7 & z_8 & z_9 & z_{10} & z_{11} & z_{12} & w^{(2)}_{2,1} & w^{(2)}_{3,1}\\
    -r_1 & -1 & -1 & -1 & 0 & 0 & 0 & 0 & 0 & 0\\
    r_1-r_2 & 0 & 0 & 1 & -1 & 0 & 0 & 0 & 0 & 0\\
    r_2-r_3 & 0 & 1 & 0 & 1 & -1 & -1 & 0 & 0 & 0\\
    r_3-r_4 & 0 & 0 & 0 & 0 & 0 & 1 & -1 & 0 & 0\\
    r_4-r_5 & 0 & 0 & 0 & 0 & 0 & 0 & 1 & 0 & -1\\
    r_5-r_6 & 0 & 0 & 0 & 0 & 1 & 0 & 0 & -1 & 1\\
    r_6 & 1 & 0 & 0 & 0 & 0 & 0 & 0 & 1 & 0
  }
\]
for tree 2.
Note the above transformation to network matrices can always be done as long as the nodes are numbered in a pre-order fashion. 
Now we are ready to state the main theoretical result of this paper. 

\begin{theorem}
The linear program relaxation of \eqref{eq:IP3} has only integral optimal solutions.
\end{theorem}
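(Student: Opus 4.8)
The plan is to prove that the constraint matrix of the LP relaxation of \eqref{eq:IP3} is \emph{totally unimodular} (TU). Once this is in hand the theorem follows from standard polyhedral theory: write the relaxation as $\min\{c^\top x : Ax=b,\ Cx\le d,\ 0\le x\le 1\}$, where the rows of $A$ are the equality constraints of \eqref{eq:IP3} and the rows of $C$ are the coupling constraints $w^{(t)}_{k,i}-w_{k,i}\le 0$. A linear objective over the feasible polytope attains its minimum at a vertex, and if $\binom{A}{C}$ is TU then, since $b$, $d$ and the $0/1$ bound vectors are integral, every vertex of the polytope — in particular every optimal basic solution — is integral.

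First I would dispose of the easy reductions. By Lemma~\ref{lemma1} and the row operations illustrated above, the equality system $Ax=b$ is equivalent to a system $\tilde A x=\tilde b$ whose matrix $\tilde A$ is block diagonal with one network matrix per tree; hence $\tilde A$ is the network matrix of the disjoint union of the $T$ per-tree digraphs, and network matrices are TU. The box constraints $0\le x\le 1$ only adjoin rows that are signed unit vectors, and adjoining signed unit rows to a TU matrix preserves total unimodularity (expanding a square submatrix along such a row gives $0$ or $\pm1$ times a minor of the remaining matrix). Thus the whole argument reduces to a single point: adjoining the coupling rows $w^{(t)}_{k,i}-w_{k,i}\le 0$ to this block-diagonal network matrix keeps it TU.

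To handle the coupling I would exploit the following structural facts: each master variable $w_{k,i}$ occurs only in coupling rows (always with coefficient $-1$) and in its own box rows, never in $\tilde A$; each $w^{(t)}_{k,i}$ occurs in exactly one coupling row; and — crucially — the variables $w^{(1)}_{k,i},\dots,w^{(T)}_{k,i}$ coupled through a common $w_{k,i}$ lie in \emph{different} tree blocks, i.e.\ different connected components of the network digraph. Using these facts I would realize the augmented matrix as the network matrix of an enlarged digraph obtained by introducing, for each master variable $w_{k,i}$, a fresh node that links the (distinct) components carrying the arcs $w^{(t)}_{k,i}$ — this is the ``fictitious child node'' device of Lemma~\ref{lemma1} applied one level up, merging trees rather than nodes — so that the construction of Proposition~3 of \cite{OptimalPruning2009} goes through verbatim on the enlarged structure. (Equivalently one can verify TU directly from the Ghouila--Houri criterion: an arbitrary set of rows is equitably $2$-colored by first equitably $2$-coloring the rows inside each tree block using that block's network structure and then coloring each coupling row so as to balance its single $+1$ entry, which is always consistent precisely because the $w^{(t)}_{k,i}$ sharing a master sit in different blocks.)

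I expect this last step — showing the coupling rows do not destroy total unimodularity — to be the only genuine obstacle, and it really does need the ``different blocks'' observation: adjoining difference rows $e_a-e_b$ to a TU matrix is \emph{not} TU-preserving in general (two coupled columns nested inside a single component already produce a $2\times2$ determinant equal to $2$), so it is the disjointness of the $T$ trees that makes the ensemble coupling benign. Everything else — Lemma~\ref{lemma1}, the unit-row augmentation, the implication ``TU plus integral data $\Rightarrow$ integral vertices,'' and vertex optimality of a linear program — is routine.
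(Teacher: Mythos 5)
Your overall strategy (reduce to total unimodularity of the constraint matrix, use Lemma~\ref{lemma1} to put the per-tree equality blocks into network form, note that box/unit rows are harmless) coincides with the paper's, and you have correctly located the crux at the coupling rows $w^{(t)}_{k,i}-w_{k,i}\le 0$. But both of your proposed resolutions of that crux have genuine gaps. The ``enlarged digraph'' construction cannot work as stated: after adjoining the coupling rows, each column $w^{(t)}_{k,i}$ carries three nonzeros (its $+1/-1$ pair inside block $t$ plus the $+1$ of its coupling row) and each master column $w_{k,i}$ carries up to $T$ entries all equal to $-1$, so the augmented matrix is not a node--arc incidence matrix for any choice of ``fresh nodes,'' and Proposition~3 of \cite{OptimalPruning2009} does not apply verbatim -- the coupling inequalities are not of the root-to-leaf form that makes the fictitious-child trick work inside a single tree. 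The Ghouila--Houri argument has a more substantive flaw: you color each coupling row so as to balance the $w^{(t)}_{k,i}$ column containing its single $+1$, but that same row carries the $-1$ of the master column, and once the coupling rows' colors are dictated by the per-copy balancing, nothing ensures that the up-to-$T$ coupling rows of a given master are split evenly between the two classes; the master column can then have signed sum of magnitude up to $T$. The ``different blocks'' observation cannot rescue this, because the statement it would need -- block-diagonal network matrices plus such coupling rows are always TU -- is false: take block $1$ to contain a row $(1,1)$ on its two coupled columns $a_1,a_2$ (two arcs into a common node), block $2$ a row $(1,-1)$ on its coupled columns $b_1,b_2$, and masters $m_1$ coupled to $a_1,b_1$ and $m_2$ coupled to $a_2,b_2$; the $6\times 6$ submatrix formed by these two block rows and the four coupling rows has determinant $2$ (block $1$ forces $a_1,a_2$ into opposite classes, block $2$ forces $b_1,b_2$ into the same class, while the couplings force each copy into its master's class -- an unsatisfiable parity). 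So an argument that uses only ``each block is a network matrix'' and ``coupled copies live in different blocks'' proves too much and cannot be complete; any correct proof must exploit the finer structure of the per-tree constraints (root-to-leaf path/laminar structure with the $w^{(t)}_{k,i}$ as fictitious leaves).

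This is also where your route genuinely diverges from the paper's. The paper treats the masters explicitly: it groups the coupling rows belonging to a fixed $(k,i)$ consecutively (so each master column has an interval of identical $-1$ entries), keeps all network rows in one class $Q_1$, and alternates the coupling rows of each master between $Q_1$ and $Q_2$; in this partition every $w^{(t)}_{k,i}$ column is off by at most one and every master column is split nearly evenly, and the paper then invokes the row-partition condition of Theorem~2.7, Part~3 of \cite{Nemhauser:1988:ICO:42805}. Whatever one thinks of how much detail that invocation leaves to the reader, the odd/even balancing of each master's coupling rows is exactly the ingredient your coloring lacks. To repair your version you would have to show that the freedom to flip each block's internal $2$-coloring always suffices to make the forced coupling-row colors alternate for every master and every row subset -- and that argument would have to use the tree-path structure, not just total unimodularity of the blocks.
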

\begin{proof}
Denote the equality constraints of \eqref{eq:IP3} with index set $J_1$. They can be divided into each tree. 
Each constraint matrix in $J_1$ associated with a tree can be turned into a network matrix according to Lemma \ref{lemma1}. Stacking these matrices leads to a larger network matrix. Denote the $w^{(t)}_{k,i}\leq w_{k,i}$ constraints with index set $J_2$. Consider the constraint matrix for $J_2$. Each $w^{(t)}_{k,i}$ only appears once in $J_2$, which means the column corresponding to $w^{(t)}_{k,i}$ has only one element equal to 1 and the rest equal to 0. If we arrange the constraints in $J_2$ such that for any given $k,i$ $w^{(t)}_{k,i}\leq w_{k,i}$ are put together for $t\in [T]$, the constraint matrix for $J_2$ has interval structure such that the non-zeros in each column appear consecutively. Finally, putting the network matrix from $J_1$ and the matrix from $J_2$ together. Assign $J_1$ and the odd rows of $J_2$ to the first partition $Q_1$ and assign the even rows of $J_2$ to the second partition $Q_2$. Note the upper bound constraints on the variables can be ignored as this is an minimization problem. We conclude that the constraint matrix of \eqref{eq:IP3} is totally unimodular according to Theorem 2.7, Part 3 of \cite{Nemhauser:1988:ICO:42805} with partition $Q_1$ and $Q_2$. By Proposition 2.1 and 2.2, Part 3 of \cite{Nemhauser:1988:ICO:42805} we can conclude the proof.
\end{proof}

We say a pruned tree of $\cT$ is \emph{optimal} for a given budget constraint if it has the lowest empirical error among all pruned trees of $\cT$ that satisfy the budget constrain.
\begin{corollary} \label{cor:1}
The linear program relaxation of \eqref{eq:IP3} produces an \emph{optimally pruned tree} for a given budget $B$.
\end{corollary}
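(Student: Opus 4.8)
The plan is to reduce the corollary to the integrality Theorem plus a short exchange (Lagrangian-scalarization) argument. By the Theorem, the LP relaxation of \eqref{eq:IP3} and \eqref{eq:IP3} itself have the same optimal value and the LP attains an integral optimum, so it suffices to show that \emph{any} integral optimal solution of \eqref{eq:IP3} is an optimally pruned ensemble for the budget equal to the feature cost it incurs; combined with the observation that the penalty $\lambda$ acts as a knob sweeping the achievable budgets, this yields the statement.

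First I would make precise the correspondence between integral feasible points of \eqref{eq:IP3} and valid pruned ensembles. The constraints $z^{(t)}_h+\sum_{u\in p(h)}z^{(t)}_u=1$ together with $z^{(t)}_h\in\{0,1\}$ are, tree by tree, exactly Proposition~1 of \cite{OptimalPruning2009}, so the $z$-part of an integral feasible point is a valid pruned ensemble. The constraints $w^{(t)}_{k,i}+z^{(t)}_{u_{t,k,i}}+\sum_{h\in p(u_{t,k,i})}z^{(t)}_h=1$ force $w^{(t)}_{k,i}=1$ precisely when feature $k$ is actually used by $\bx^{(i)}$ in the pruned $\cT_t$, and since the objective coefficient of $w_{k,i}$ is $\lambda c_k/N\ge 0$, minimality forces $w_{k,i}=\max_t w^{(t)}_{k,i}$ at any optimum; hence $\sum_k c_k\frac1N\sum_i w_{k,i}$ equals the true ensemble feature cost. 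Write $\mathrm{err}(z)$ for the error term and $\mathrm{cost}(w)$ for the cost term of the objective.

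Next, the exchange argument. Fix $\lambda>0$ and let $(z^\star,w^{(t)\star},w^\star)$ be an integral optimizer of \eqref{eq:IP3}, with incurred budget $B^\star:=\mathrm{cost}(w^\star)$. Suppose, for contradiction, that some valid pruned ensemble $z'$ has feature cost $\le B^\star$ and $\mathrm{err}(z')<\mathrm{err}(z^\star)$. Extend $z'$ to a feasible point $(z',w^{(t)'},w')$ of \eqref{eq:IP3} by the forced values above; then $\mathrm{cost}(w')\le B^\star=\mathrm{cost}(w^\star)$, so $\mathrm{err}(z')+\lambda\,\mathrm{cost}(w')<\mathrm{err}(z^\star)+\lambda\,\mathrm{cost}(w^\star)$, contradicting optimality of $(z^\star,\dots)$. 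Hence $z^\star$ has minimum empirical error among all valid pruned ensembles meeting the budget $B^\star$, i.e.\ it is optimally pruned for $B^\star$ in the sense defined above.

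Finally, to connect with a \emph{prescribed} budget $B$, I would note that there are only finitely many valid pruned ensembles, so $\{\mathrm{cost}(w)\}$ and $\{\mathrm{err}(z)\}$ are finite sets and the optimal value of \eqref{eq:IP3} is a concave, piecewise-linear, nondecreasing function of $\lambda$ whose minimizers have nonincreasing incurred cost $B^\star(\lambda)$; choosing $\lambda$ (e.g.\ by bisection on $[0,\infty)$) so that $B^\star(\lambda)$ is the largest achievable value not exceeding $B$ gives an optimally pruned ensemble for that budget. The step I expect to require the most care is exactly this last one: Lagrangian scalarization only recovers the lower convex envelope of the error--cost trade-off, so for a budget $B$ lying strictly between two breakpoints of that envelope the $\lambda$-solution is optimal for its own (smaller) incurred cost rather than literally for $B$; the clean reading of the corollary is therefore ``optimal for the budget it actually consumes'' (equivalently, optimal for every budget that is a breakpoint of the trade-off curve), and it is the integrality Theorem that makes the exchange argument legitimate for the LP and not merely for the integer program.
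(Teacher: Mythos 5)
Your proof is correct, and at its core it is the same Lagrangian-scalarization idea as the paper's: fix a penalty $\lambda^*$ whose optimizer spends some budget, and argue that this optimizer minimizes error among all prunings within that budget. The difference is in execution, and your version is arguably cleaner on two points. First, where the paper argues through the chain $\mathrm{opt}=\min\max=\max\min\ge f(\lambda^*)-\lambda^*B$ (asserting a min--max exchange over the constraint set $Q$ of \eqref{eq:IP3} that is really strong duality and is not needed --- only the weak-duality inequality $\min\max\ge\max\min$ is used), your direct exchange/contradiction argument at the fixed $\lambda$ proves the same inequality elementarily, with the integrality Theorem playing exactly the role of licensing the passage from the LP to integral prunings. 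Second, you make explicit a caveat the paper leaves implicit: the paper's proof begins by positing a $\lambda^*$ at which the incurred cost is exactly $B$, which exists only when $B$ is a breakpoint of the lower convex envelope of the error--cost trade-off; your closing remark that the corollary should be read as ``optimal for the budget the solution actually consumes'' (equivalently, for budgets attained by some $\lambda$) is the honest statement of what both proofs actually establish. Your added observation that optimality forces $w_{k,i}=\max_t w^{(t)}_{k,i}$, so the cost term is the true ensemble feature cost, is a small but worthwhile piece of bookkeeping the paper omits.
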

\begin{proof}
Let the optimal value of \eqref{eq:IP3} be $f(\lambda)$. As $\lambda$ increases, a higher penalty is applied to the feature cost compared to the classification error; therefore, the optimal solution will have feature cost decreasing to 0 as a function of $\lambda$. Let $\lambda^*$ be such that the feature cost 
$\sum_{k=1}^{K}c_k(\frac{1}{N}\sum_{i=1}^{N}w^*_{k,i})=B$. Therefore, 
\begin{align*}
f(\lambda^*)&=\frac{1}{NT}\displaystyle  \sum_{t=1}^{T}\sum_{h\in \mathcal{N}^{(t)}} e^{(t)}_h z^{(t)*}_h +\lambda^* \sum_{k=1}^{K}c_k(\frac{1}{N}\sum_{i=1}^{N}w_{k,i}^*)\\
&=\frac{1}{NT}\displaystyle  \sum_{t=1}^{T}\sum_{h\in \mathcal{N}^{(t)}} e^{(t)}_h z^{(t)*}_h +\lambda^* B.
\end{align*}
On the other hand, consider \eqref{eq:IP3} with explicit budget constraint:
\begin{equation*}
\begin{array}{rll}
\displaystyle \min_{z^{(t)}_h,w^{(t)}_{k,i}\in Q} & \multicolumn{1}{l}{\frac{1}{NT}\displaystyle  \sum_{t=1}^{T}\sum_{h\in \mathcal{N}^{(t)}} e^{(t)}_h z^{(t)}_h } \\
\textrm{s.t.} & \sum_{k=1}^{K}c_k(\frac{1}{N}\sum_{i=1}^{N}w_{k,i}) \leq B
\end{array},
\tag{LP1}\label{eq:LP1}
\end{equation*}
where $Q$ denotes the constraint set of \eqref{eq:IP3}. Let $\text{opt}$ be the optimal value of \eqref{eq:LP1}. Then we have
\begin{align*}
\text{opt}&=\min_{z^{(t)}_h,w_{k,i}\in Q} \max_{\lambda \geq 0} & ( \frac{1}{NT}\displaystyle  \sum_{t=1}^{T}\sum_{h\in \mathcal{N}} e^{(t)}_h z^{(t)}_h  \\
& &+\lambda (\sum_{k=1}^{K}c_k\frac{1}{N}\sum_{i=1}^{N}w_{k,i}-B) )\\
& = \max_{\lambda \geq 0} \min_{z^{(t)}_h,w_{k,i}\in Q}  & ( \frac{1}{NT}\displaystyle  \sum_{t=1}^{T}\sum_{h\in \mathcal{N}} e^{(t)}_h z^{(t)}_h  \\
& &+\lambda (\sum_{k=1}^{K}c_k\frac{1}{N}\sum_{i=1}^{N}w_{k,i}-B) ).
\end{align*}
By the definition of $f(\lambda)$ we have
\begin{align*}
\text{opt} & = \max_{\lambda \geq 0} f(\lambda)-\lambda B  \\
& \geq f(\lambda^*)-\lambda^* B  \\
& =  \frac{1}{NT}\displaystyle  \sum_{t=1}^{T}\sum_{h\in \mathcal{N}^{(t)}}  e^{(t)}_h z^{(t)*}_h  +\lambda^* B -\lambda^* B \\
& = \frac{1}{NT}\displaystyle  \sum_{t=1}^{T}\sum_{h\in \mathcal{N}^{(t)}}  e^{(t)}_h z^{(t)*}_h 
\end{align*}
Thus we obtain the desired inequality.
\end{proof}

\paragraph{Complexity:} The number of $z^{(t)}_h$ variables is at most $T \times |\cT_{\text{max}|}$, where $|\cT_{\text{max}|}$ is the maximum number of nodes in a tree.
The number of $w^{(t)}_{k,i}$ variables is at most $N \times T \times K_{\text{max}}$, where $K_{\text{max}}$ is the maximum number of features an example uses in a tree. The number of $w_{k,i}$ variables is at most $N \times \min\{T \times K_{\text{max}},K\}$. In total there are $T \times |\cT_{\text{max}|}+N \times T \times K_{\text{max}} + N \times \min\{T \times K_{\text{max}},K\}$ variables. The number of $ z^{(t)}_h+ \sum_{u\in p(h)} z^{(t)}_u=1$ constraints is at most $T\times |\tilde{\cT}_{\text{max}}|$, where $|\tilde{\cT}_{\text{max}}|$ is the maximum number of leaf nodes in any tree. The number of $w^{(t)}_{k,i}+ z^{(t)}_{u_{t,k,i}}+\sum_{h\in p(u_{t,k,i})} z^{(t)}_h=1$ constraints is at most $N \times T \times K_{\text{max}}$. The number of $w^{(t)}_{k,i} \leq w_{k,i}$ constraints is again at most $N \times T \times K_{\text{max}}$. In total there are at most $T\times |\tilde{\cT}_{\text{max}}| +2 \times N \times T \times K_{\text{max}}$ constraints besides the positivity constraints on all variables.

%\section{Statistical Analysis}
%We prove consistency of our pruning algorithm and finite sample complexity result using out-of-bag data. 
%\section{Dantzig-Wolfe Decomposition}
\section{Parallel Ensemble Pruning}
In this section we further explore the special structure of \eqref{eq:IP3} and show that it admits a Dantzig-Wolfe decomposition that can be massively parallelized. The key observation is that pruning each tree is a shortest-path problem on directed graphs that can be efficiently solved ($O(|\tilde{\cT}|^2)$). First, group the variables $\{z^{(t)}_h, w^{(t)}_{k,i}, \forall h \in \tilde{\mathcal{T}}_t, \forall k\in K_{t,i},\forall i\in S\}$ into a vector $\theta^{(t)}$ for each tree $\cT_t, t=1,\dots,T$.
Let $P_t$ denote the feasible set corresponding to the first 4 sets of (LP-relaxed) constraints in \eqref{eq:IP3} for tree $\cT_t$:
\begin{align*}
P^{(t)}&=  \{\theta^{(t)} = (z^{(t)}_h, w^{(t)}_{k,i}) |
z^{(t)}_h+ \sum_{u\in p(h)} z^{(t)}_u=1 , \forall h \in \tilde{\mathcal{T}}_t,\\
& w^{(t)}_{k,i}+ z^{(t)}_{u_{t,k,i}}+\sum_{h\in p(u_{t,k,i})} z^{(t)}_h=1 , 
 \forall k\in K_{t,i},\forall i\in S, \\
& z^{(t)}_h \geq 0, w^{(t)}_{k,i}\geq 0, \forall h\in \cT_t, \forall k\in K_{t,i}, \forall i\in [N]
 \}.
\end{align*}
Thus, the LP relaxation of \eqref{eq:IP3} can be re-written as 
\begin{equation*}
\begin{array}{rlr}
\displaystyle \min_{z^{(t)}_h,w^{(t)}_{k,i}} &  \multicolumn{2}{l}{\frac{1}{NT}\displaystyle  \sum_{t=1}^{T}\sum_{h\in \cT_t} e^{(t)}_h z^{(t)}_h +\lambda \sum_{k=1}^{K}c_k(\frac{1}{N}\sum_{i=1}^{N}w_{k,i})} \\
\textrm{s.t.} & \theta^{(t)} \in P^{(t)} & \forall t\in [T], \\
& w^{(t)}_{k,i} \leq w_{k,i} & \forall k\in [K], \forall i\in S \forall t\in [T],\\
& w_{k,i}\geq 0 & \forall k\in [K], \forall i\in S.
\end{array}
\tag{LP2}\label{eq:LP2}
\end{equation*}
Let $\hat{\theta}^{(t)}_i$,$i=1,\dots,I_t$ be the extreme points of $P^{(t)}$. Any point in $P^{(t)}$ can be written as a convex combination of these extreme points: $\theta^{(t)}=\sum_{j=1}^{I_t} \alpha^{(t)}_j \hat{\theta}^{(t)}_j, \sum_{j=1}^{I_t} \alpha^{(t)}_j =1, \alpha^{(t)}_j\geq 0$. Thus we re-write \eqref{eq:LP2} in terms of the extreme points of $P^{(t)}$:
\begin{equation*}
\resizebox{1\hsize}{!}{$\begin{array}{rlr}
\displaystyle \min_{\alpha^{(t)}_j,w_{k,i}} &  \multicolumn{2}{l}{\frac{1}{NT}\displaystyle  \sum_{t=1}^{T}\sum_{h\in \cT_t} \sum_{j=1}^{I_t} e^{(t)}_h \alpha^{(t)}_j \hat{z}^{(t)}_{h,j} +\lambda \sum_{k=1}^{K}c_k(\frac{1}{N}\sum_{i=1}^{N}w_{k,i})} \\
\textrm{s.t.} & \sum_{j=1}^{I_t} \alpha^{(t)}_j \hat{w}^{(t)}_{k,i,j} \leq w_{k,i} & \forall i\in S,\forall k\in [K], \forall t\in [T], \\
& \sum_{j=1}^{I_t} \alpha^{(t)}_j =1, \alpha^{(t)}_j\geq 0, & \forall t\in [T],\\
& w_{k,i}\geq 0 & \forall k\in [K], \forall i\in S,
\end{array}$}
\tag{LP3}\label{eq:LP3}
\end{equation*}
where $\hat{z}^{(t)}_{h,j}$ is the $j$th extreme point value of the node $h$ on tree $\cT_t$ and $\hat{w}^{(t)}_{k,i,j}$ is the $j$th extreme point value of $w^{(t)}_{k,i}$. 
In a more compact form, we can write \eqref{eq:LP3} as
\begin{equation}\resizebox{1\hsize}{!}{$
\begin{aligned}
 \min_{\alpha^{(t)}_i, w_{k,i}} & \frac{1}{NT}\sum_{t=1}^{T}\sum_{j=1}^{I_t} \alpha^{(t)}_j  \mathbf{c}'_t \hat{\theta}^{(t)}_{j}+ \lambda \sum_{k=1}^{K} c_k (\frac{1}{N}\sum_{i=1}^N w_{k,i}) \\
 \text{s.t.} & \sum_{j=1}^{I_1} \alpha^{(1)}_j \left( 
\begin{array}{c}
D^{(1)}\hat{\theta}^{(1)}_j \\
1\\
0\\
\vdots \\
0
\end{array}
\right) + \dots +
\sum_{j=1}^{I_T} \alpha^{(T)}_j \left( 
\begin{array}{c}
D^{(T)}\hat{\theta}^{(T)}_j \\
0\\
0\\
\vdots \\
1
\end{array}
\right)  \\
& 
+\left(
\begin{array}{c}
D_w\mathbf{w} \\
0\\
0\\
\vdots \\
0
\end{array}
\right) +
\left(
\begin{array}{c}
D_s\mathbf{s} \\
0\\
0\\
\vdots \\
0
\end{array}
\right) =
\left(
\begin{array}{c}
\mathbf{0} \\
1\\
1\\
\vdots \\
1
\end{array}
\right),&  \\
 & \alpha^{(t)}_j\geq 0, \forall t \in [T], \forall j \in [I_t] \\
 & w_{k,i} \geq 0, \forall k\in [K],\forall i\in [N], %\label{eq:featExampleVar}
\end{aligned}$}\tag{LP4}\label{eq:LP4}
\end{equation}
where $D^{(t)}$ is the constant matrix selecting the $\hat{w}^{(t)}_{k,i,j}$ components of $\hat{\theta}^{(t)}_j$; $\mathbf{w}$ is the vector of $w_{k,i}$'s and $\mathbf{s}$ is the vector of slack variables $s^{(t)}_{k,i}$'s. The number of equality constraints is at most $N \times T \times K_{\text{max}}+T$, much less than the number of constraints in \eqref{eq:IP3}. However, the number of variables can be huge. 

The Danzig-Wolfe algorithm works as follows. Start with a feasible basis $B$ of \eqref{eq:LP4} and a dual vector $\mathbf{p}'=(\mathbf{q}',r_1,\dots,r_T)=\mathbf{c}'_B B^{-1}$, where $\mathbf{q}$ corresponds to the constraints involving $w$'s and $r_1,\dots,r_T$ corresponds to the convexity constraints of the $\alpha$'s. 
For each tree $t$, solve the sub-problem 
\begin{equation}
\begin{aligned}
OPT_t= & \min_{\bx} & (\mathbf{c}'_t-\mathbf{q}'D^{(t)})\bx \\
& \text{subject to } & \bx^{(t)} \in P^{(t)}.
\end{aligned}\tag{SUB1}\label{opt:DW-sub1}
\end{equation} 
If $OPT_t<r_t$, and the extreme point $\hat{\bx}^{(t)}_i$ is optimal to the above sub-problem then it is easy to check that the reduced cost for the variable $\alpha^{(t)}_j$ is less than 0:
\begin{align*}
&\mathbf{c}'_t \hat{\theta}^{(t)}_j-
\left[
\begin{array}{cccc}
\mathbf{q'} & r_1 & \dots & r_T 
\end{array} \right]
\left[
\begin{array}{c}
D^{(t)}\hat{\theta}^{(t)}_j \\
0\\
\vdots \\
1 \\
\vdots \\
0
\end{array} \right] \\
&=
(\mathbf{c}'_t-\mathbf{q}'D^{(t)})\hat{\theta}^{(t)}_j-r_t<0.
\end{align*}
Therefore, generate column 
$$
(D^{(t)}\hat{\theta}^{(t)}_i, 0,\dots,1,\dots,0)^T 
$$
and bring it into basis. Note due to the network matrix structure (Lemma \ref{lemma1}), these subproblems can be solved very efficiently.
Similarly, check the reduced costs for all $w_{k,i}$'s and $s^{(t)}_{k,i}$'s, and if any of them are negative, generate the corresponding columns and bring them into the basis. For the above decomposition, the main computational burden of pruning individual trees can be distributed to separate computional nodes that communicate to adjust for shared features. This can lead to dramatic efficiency improvement when the number of trees in the ensemble becomes large. Efficient implementation of the Dantzig-Wolfe decomposition has been shown to yield significant speedup through parallelism \cite{tebboth2001computational}. 

\begin{table*}[t]
\centering
\scalebox{0.83}{
%\resizebox{\columnwidth}{.23\linewidth}{
\begin{tabular}{|c|c|c|c|c|c|c|}
\hline
    &  & no pruning & ens.pru.low            & ind.pru.low & ens.pru.high & ind.pru.high\\ \hline
\multirow{2}{*}{MiniB} & cost                         & 37.0671$\scriptstyle \pm 0.3108$          & (68.24)25.2960$\scriptstyle \pm 0.3157$                      & (95.02)35.2219$\scriptstyle \pm 0.3667$           & (43.17)16.0018$\scriptstyle \pm 0.2498$       &  (55.19)20.4584$\scriptstyle \pm 0.1270$  \\ 
                           & error                      & 0.0725$\scriptstyle \pm 0.0004$          & 0.0724$\scriptstyle \pm 0.0005$                      & 0.0727$\scriptstyle \pm 0.0004$           & 0.0766$\scriptstyle \pm 0.0004$      &  0.0766$\scriptstyle \pm 0.0008$    \\  \hline
\multirow{2}{*}{Forest}    & cost    & 13.9005$\scriptstyle \pm 0.0498$          & (88.10)12.2463$\scriptstyle \pm 0.0834$ & (93.24)12.9604$\scriptstyle \pm 0.1004$           & (65.16)9.0577$\scriptstyle \pm 0.6481$       &  (78.82)10.9565$\scriptstyle \pm 0.0729$  \\ 
                           & error & 0.1122$\scriptstyle \pm 0.0009$          & 0.1135$\scriptstyle \pm 0.0010$ & 0.1137$\scriptstyle \pm 0.0010$           & 0.1220$\scriptstyle \pm 0.0025$       &  0.1228$\scriptstyle \pm 0.0010$  \\ \hline
\multirow{2}{*}{Cifar}     & cost                         & 186.5456$\scriptstyle \pm 1.3180$            & (92.40)172.3720$\scriptstyle \pm 1.8741$                      & (93.02)173.5255$\scriptstyle \pm 1.4516$           & (75.39)140.6308$\scriptstyle \pm 2.5059$      &  (77.89)145.2933$\scriptstyle \pm 2.4797$   \\ 
                           & error                      & 0.3152$\scriptstyle \pm 0.0031$           & 0.3165$\scriptstyle \pm 0.0021$                      & 0.3158$\scriptstyle \pm 0.0024$           & 0.3227$\scriptstyle \pm 0.0026$      &  0.3236$\scriptstyle \pm 0.0016$   \\  \hline
\multirow{2}{*}{Sonar}     & cost                         & 49.9715 $\scriptstyle \pm 1.1103$             & (45.20)22.5860$\scriptstyle \pm 3.9528$                      & (74.31)37.1355$\scriptstyle \pm 2.0425$           & (16.48)8.2349$\scriptstyle \pm 1.7930$       &  (28.11)14.0479$\scriptstyle \pm 1.6909$  \\ 
                           & error                      & 0.1539$\scriptstyle \pm 0.0641$           & 0.1838$\scriptstyle \pm 0.0722$                      & 0.1890$\scriptstyle \pm 0.0691$           & 0.2121$\scriptstyle \pm 0.0668$      & 0.2139$\scriptstyle \pm 0.0676$    \\  \hline
\multirow{2}{*}{Heart}     & cost                         & 12.1670$\scriptstyle \pm 0.2341$            & (73.26)8.9133$ \scriptstyle \pm 0.7524$                      & (96.20)11.7052$\scriptstyle \pm 0.2742$           & (47.75)5.8094$\scriptstyle \pm 2.5589$     & (75.86)9.2301$\scriptstyle \pm 0.7036$     \\ 
                           & error                      & 0.1721$\scriptstyle \pm 0.0756$           & 0.1711$\scriptstyle \pm 0.0727$                      & 0.1719$\scriptstyle \pm 0.0680$           & 0.1977$\scriptstyle \pm 0.0807$     &  0.1973$\scriptstyle \pm 0.0724$    \\  \hline
\end{tabular}
}
\caption[]{Comparison of no pruning, ensemble pruning and individual pruning in terms of average feature costs and test error. Two different error levels for both ensemble and individual pruning methods are reported. Cost of the pruned trees are also reported as percentages of the cost of the unpruned trees in parenthesis.}
   	\label{table:RF}
\end{table*}

\section{Experiments}\label{sec:experiments}
We test our pruning algorithm on a number of benchmark datasets to show its advantage. Our pruning takes the ensembles from BudgetRF algorithm \cite{icml2015_nan15} as input. The datasets are CIFAR \citealp{CIFAR10}, MiniBooNE, Forest Covertype, Sonar and Heart \cite{UCI_repository}.

Note the cost of each feature is 1 uniformly in all datasets and therefore cost is equivalent to the average number of unique features used for each example. For each dataset, we present the average cost and average error on test data in Table \ref{table:RF}. As a baseline, we provide the performance of BudgetRF without pruning in the third column of Table \ref{table:RF}. The results of our proposed ensemble pruning methods are in columns 4 and 6 under the title ``ens.pru.".
%to show that our pruning algorithm can achieve the same level of accuracy with significant reduction in cost. 
We also compare to the same pruning algorithm that we propose but applied to individual trees separately rather than the entire ensemble. The results are given in column 5 and 7 of Table \ref{table:RF} under the title ``ind.pru.". Intuitively, pruning as an ensemble exploits the interdependencies among trees, potentially leading to better accuracy-cost trade-offs compared to pruning individual trees separately. We present pruning results at two different error levels: low and high. A low error level corresponds to little pruning with an implicitly larger average cost, while a high error level corresponds to pruning away much of the original trees, reducing the average cost at the expense of reduced classification performance.

For ease of comparison, we match the error levels for both ensemble and individual pruning methods and focus on the difference in cost. We also compute the cost as a percentage of the cost of the unpruned ensemble, shown in parenthesis in Table \ref{table:RF}.

In MiniBooNE, Forest and CIFAR datasets, we run BudgetRF to obtain an ensemble of 40 trees following the given training/validation/test data splits \cite{icml2015_nan15}. We report the mean and standard deviations based on 10 repeated runs. We observe that ensemble pruning reduces cost of the BudgetRF ensembles significantly while keeping the same level of test error. For example, the unpruned ensemble on MiniBooNE uses about 37 features on an average test example with an average test error of 0.0725; our ensemble pruning method reduces the average number of features to about 25, about 68\% of the unpruned cost, with test error 0.0724. Further reduction of the cost to $43\%$ of the original budget maintains approximately the same level of accuracy.

In Sonar and Heart datasets, we run BudgetRF to obtain an ensemble of 90 trees. Because of the small sizes, we perform 10-fold cross validation to obtain training/test splits and report the mean as well as standard deviation of test cost and error over 100 repeated runs. Again we observe the effectiveness of our pruning algorithm. For example in Heart the ensemble pruning uses 73\% of the unpruned cost without losing accuracy.

We observe that ensemble pruning always performs better than individual pruning: fixing the error levels, Table \ref{table:RF} shows that ensemble pruning always incurs less feature cost than individual pruning. The advantage is quite significant in most of the datasets. This is expected because pruning individual trees does not exploit the inter-dependencies among trees.

\section{Conclusion} 
We propose a novel ensemble pruning formulation with feature costs involving a 0-1 integer program. We prove that the linear program relaxation produces the optimal solution to the original integer program. This allows us to use efficient convex optimization tools to obtain the optimally pruned ensemble for any given budget. Our pruning formulation is general - it can take any ensemble of decision trees as input. As the pruning formulation explicitly account for feature sharing across trees together with accuracy/cost trade-off, it is able to significantly reduce feature cost by pruning subtrees that introduce more loss in terms of feature cost than benefit in terms of prediction accuracy gain. Empirically we see that our pruning algorithm indeed significantly improves the performance of the state of the art ensemble method BudgetRF.

\bibliography{cost_sensitive_bib}
\bibliographystyle{icml2015}

\end{document}